\newtheorem{lemma}{Lemma}
\newtheorem{theorem}{Theorem}
\title{Robust Regularization with Adversarial Labelling of Perturbed Samples}
\author{
Xiaohui Guo\textsuperscript{\rm 1}
\and
Richong Zhang\textsuperscript{\rm 2}\footnote{Corresponding author:\texttt{zhangrc@act.buaa.edu.cn}}
\and
Yaowei Zheng\textsuperscript{\rm 2}
\And
Yongyi Mao\textsuperscript{\rm 3}
\affiliations
\textsuperscript{\rm 1}Hangzhou Innovation Institute, Beihang University, Hangzhou, China\\
\textsuperscript{\rm 2}BDBC and SKLSDE, School of Computer Science and Engineering, Beihang University, Beijing, China\\
\textsuperscript{\rm 3}School of Electrical Engineering and Computer Science, University of Ottawa, Ottawa, Canada\\
\emails
\{guoxh, zhangrc\}@act.buaa.edu.cn,
hiyouga@buaa.edu.cn,
ymao@uottawa.ca
}
\begin{document}

\maketitle

\begin{abstract}

Recent researches have suggested that the predictive accuracy of neural network may contend with its adversarial robustness. This presents challenges in designing effective regularization schemes that also provide strong adversarial robustness. Revisiting Vicinal Risk Minimization (VRM) as a unifying regularization principle, we propose Adversarial Labelling of Perturbed Samples (ALPS) as a regularization scheme that aims at improving the generalization ability and adversarial robustness of the trained model. ALPS trains neural networks with synthetic samples formed by perturbing each authentic input sample towards another one along with an adversarially assigned label. The ALPS regularization objective is formulated as a min-max problem, in which the outer problem is minimizing an upper-bound of the VRM loss, and the inner problem is L$_1$-ball constrained adversarial labelling on perturbed sample. The analytic solution to the induced inner maximization problem is elegantly derived, which enables computational efficiency. Experiments on the SVHN, CIFAR-10, CIFAR-100 and Tiny-ImageNet datasets show that the ALPS has a state-of-the-art regularization performance while also serving as an effective adversarial training scheme.

\end{abstract}

\section{Introduction}

Despite the stunning power of deep learning, regularization remains as an important technique in deep neural networks. In general, regularization may broadly refer to any technique that prevents a model from overfitting to the training data but generalizing poorly on the unseen data. Traditional regularization techniques are ``data-independent'', in the sense that they do not explicitly exploit the training data. These schemes, including, for example, weight decay \cite{krogh1992simple} and Dropout \cite{srivastava2014dropout}, impose additional constraints on the model so as to restrict its capacity. Label smoothing \cite{szegedy2016rethinking} may also be regarded as a data-independent regularization scheme, where the training labels are ``softened'', leading to smoother classification boundaries. Effective as they are, the ignorance of the data structure with these data-independent approaches is arguably a major limitation.

Indeed there exists another class of regularization schemes, which are ``data-dependent''. The most well-known example in this class is perhaps data augmentation, where additional examples are constructed from the training set based on certain prior knowledge. Each of the new examples then provides an additional constraint, thereby regularizing the model. To date, properly designed data augmentation schemes remain as the most powerful regularization techniques \cite{cubuk2019autoaugment}.

A more recent data-dependent regularization technique is known as MixUp \cite{zhang2018mixup}. It creates synthetic examples by interpolating the training examples and assigns a soft-label heuristically to each synthetic example by a similar interpolation of the labels. This approach is shown to be very effective in \cite{zhang2018mixup}, where the authors attribute this effectiveness to the Vicinal Risk Minimization (VRM) principle \cite{chapelle2000vicinal}, dated back 20 years ago. Briefly VRM insists training a model not using the training set {\em per se}, but using samples drawn from a distribution, called a ``vicinal distribution'', that ``smears'' the training data to their vicinity. Indeed, using a particular choice of such distribution, the original VRM approach results in several effective regularization schemes \cite{chapelle2000vicinal}. 

Recently adversarial training techniques \cite{goodfellow2014explaining} are proposed to improve the robustness of a model against the adversarial attacks. In these schemes, a synthetic example is produced by slightly perturbing a training example in a peculiar direction and the label of the original training example is assigned to the synthetic example. Such synthetic data are then used to train the model. Although these techniques are originally proposed to improve the adversarial robustness of a model, they also can be used as regularization techniques (see, {\em e.g.}, \cite{miyato2018virtual}. This is sensible since adversarial vulnerability and over-fitting share some similarities: both are caused by the close proximity of training examples to the classification boundaries. However recent research has revealed that the objective adversarial training and that of regularization, although correlated, are not completely aligned \cite{madry2018towards,tsipras2018robustness,zhang2019theoretically}: excessively optimizing the model towards adversarial robustness may in fact hurt its prediction accuracy. On the other hand, as observed in \cite{verma2019manifold}, effectively regularized models, such as those by MixUp, are not sufficiently robust against adversarial attacks.

At present, it remains unclear how much adversarial robustness a model has to trade in so as to generalize well. Nonetheless, the interest of this work is to develop a regularization scheme that not only allows the model to generalize well but also make it as robust as possible.  We call such an objective ``robust regularization''.

This work starts with a revisit of the VRM framework, since moving a data point to its vicinity appears to be an essential ingredient both for regularization and for adversarial training. In this revisit, we discover that label smoothing may also be unified under the VRM framework. More interestingly, view as a VRM technique, MixUp appears to have combined the advantages of both label smoothing and the original VRM in the following sense: the vicinal distribution used in MixUp is over the joint space of both inputs and labels whereas the vicinal distributions used in label smoothing and the original VRM are only over one of the two spaces. This inspires us to set up a vicinal distribution in a way similar to MixUp, where vicinity is defined in the joint input-label space. We then arrive at a new VRM loss function. We derive a regularization scheme based on minimizing an upper bound of this loss. This formulation, also adopting a min-max optimization as in adversarial training, has an elegant property, namely, that the inner maximization can be solved in a closed form. This allows us to arrive at a very simple training algorithm, which we designate ``adversarial labelling of perturbed samples'', or ALPS. Briefly, ALPS moves a training example $\boldsymbol{x}$ in the direction towards another example $\boldsymbol{x}'$ and assigns an ``adversarial label'' to the moved example; here the adversarial label is the soft-label obtained by perturbing the label of $\boldsymbol{x}$ to the least likely label predicted by the model. The combination of sample interpolation and adversarial labelling in ALPS then allow the scheme to be effective both as a regularization scheme and as an adversarial training scheme.

Experiments on the SVHN, CIFAR-10, CIFAR-100 and Tiny-ImageNet datasets confirm that ALPS is among the best regularization schemes. At the same time, it also offers great adversarial robustness to the trained model, much more than existing regularization schemes.

\section{Related Work}

Data augmentation includes a family of regularization strategy, from geometrically translation/crop transformation, perturbing samples with (adversarial) noise, to more advanced schemes such as AutoAugment \cite{cubuk2019autoaugment}, which automatically searching for augmentation policies from data.

MixUp \cite{zhang2018mixup} linearly interpolates data examples to augment training data, and encourages linear behaviour in the learned model. Manifold MixUp \cite{verma2019manifold} extends this approach to the hidden layers in the neural network. CutMix \cite{yun2019cutmix} cuts and pastes image patches to construct new images, and mix the labels in proportion to the patch sizes. To prevent an underfitting phenomenon ``manifold intrusion''  in Mixup, both AdaMixup \cite{guo2019mixup} and AugMix \cite{hendrycksaugmix} introduce elaborately devised penalty loss functions to learn better mixing policies.

Adversarial training \cite{goodfellow2014explaining} starts from improving the model's robustness against adversarial attacks by perturbing training samples within a bounded-norm ball, where regularization arise as a knock-on effect. Layer-wise adversarial training \cite{sankaranarayanan2018regularizing} perturbs the activation of intermediate layers. Adversarial labelling \cite{kurakin2017adversarial} uses the least-likely label as a strong adversary. To tackle the over-confidence of predictions, the label smoothing regularization scheme spreads the probability of the ground-truth label slightly to other labels. This approach appears to also provide some adversarial robustness in the trained model \cite{shafahi2019batch}.

Data-independent regularization methods fall in the earlier paradigm, which include, for example, early stopping, weight decay, Dropout \cite{srivastava2014dropout}. Similar to Dropout, DropBlock \cite{ghiasi2018dropblock} generalizes input Cutout \cite{yun2019cutmix} by applying Cutout at every feature map in convolutions networks.

\section{Vicinal Risk Minimization as a General Principle for Regularization}

Consider training a neural network classifier $f:\mathcal{X}\to\mathcal{Y}$ for a $K$-class classification problem, where $\mathcal{X}$ is the input space and $\mathcal{Y}$ is the label space. Note that here we take $\mathcal{Y}$ as space of all distributions over the set of all class labels $\{1,2,\ldots,K\}$, namely, each element $\boldsymbol{y}\in\mathcal{Y}$ is a $K$-dimensional vector representing a label distribution. Let $\mathcal{D}:=\{(\boldsymbol{x}_i,\boldsymbol{y}_i)\}_{i=1}^n$ denote the training set, where each $\boldsymbol{x}_i\in\mathcal{X}$ is a training example, and each $\boldsymbol{y}_i\in\mathcal{Y}$ is the label of $\boldsymbol{x}_i$ represented as a one-hot vector.

Let $\ell:\mathcal{Y}\times\mathcal{Y}$ denote the usual cross-entropy loss. Under the Empirical Risk Minimization (ERM) formulation \cite{Vapnik1998}, the training loss is defined as the empirical risk
\begin{equation}
\mathcal{L}=\frac{1}{n}\sum_{i=1}^n\ell(f(\boldsymbol{x}_i),\boldsymbol{y}_i)
\end{equation}

One can view the above empirical risk as the expectation of the cross-entropy loss over the {\em empirical distribution} given by $\mathcal{D}$. More precisely, let the empirical distribution $\widetilde{p}$ on $\mathcal{X}\times\mathcal{Y}$ be defined as
\begin{equation}
\widetilde{p}(\boldsymbol{x},\boldsymbol{y}):=\frac{1}{n}\sum_{i=1}^n\delta_{x_i}(\boldsymbol{x})\delta_{y_i}(\boldsymbol{y})
\end{equation}
where $\delta_{x_i}(\cdot)$ and $\delta_{y_i}(\cdot)$ are the Dirac measures that put probability 1 to $\boldsymbol{x}_i$ and to $\boldsymbol{y}_i$. It then follows
\begin{equation}\label{eq:ERM}
\mathcal{L}=\int\ell(f(\boldsymbol{x}),\boldsymbol{y})\widetilde{p}(\boldsymbol{x},\boldsymbol{y}){\rm d}\boldsymbol{x}{\rm d}\boldsymbol{y}
\end{equation}

It is known that ERM may suffer from overfitting, particularly when the sample size is small or when the model capacity is large. When this occurs, although the loss value $\ell(f(\boldsymbol{x}),\boldsymbol{y})$ is low on each training point $(\boldsymbol{x},\boldsymbol{y})\in\mathcal{D}$, the loss value $\ell(f(\boldsymbol{x}),\boldsymbol{y})$ may still be high on the unseen data point $(\boldsymbol{x},\boldsymbol{y})$.

In \cite{chapelle2000vicinal}, a Vicinal Risk Minimization (VRM) framework is proposed, where the empirical distribution $\widetilde{p}$ in (\ref{eq:ERM}) is replaced with a ``vicinal distribution'' $\widehat{p}$, giving rise to the training objective
\begin{equation}\label{eq:VRM}
\widehat{\mathcal{L}}=\int\ell(f(\boldsymbol{x}),\boldsymbol{y})\widehat{p}(\boldsymbol{x},\boldsymbol{y}){\rm d}\boldsymbol{x}{\rm d}\boldsymbol{y}
\end{equation}

Specifically, the authors of \cite{chapelle2000vicinal} advocate a particular form of the vicinal distribution, $\widehat{p}_{\rm CWBV}$ (``CWBV'' are initials of the authors of \cite{chapelle2000vicinal}), using a spherical Gaussian kernel function $\mathcal{N}(\boldsymbol{x}-\boldsymbol{x}_i,\sigma^2\mathbf{I})$ in place of each Dirac measure $\delta_{x_i}$:
\begin{equation}
\widehat{p}_\mathrm{CWBV}(\boldsymbol{x},\boldsymbol{y}):=\frac{1}{n}\sum_{i=1}^n\mathcal{N}(\boldsymbol{x}-\boldsymbol{x}_i;\sigma^2\mathbf{I})\delta_{y_i}(\boldsymbol{y})
\end{equation}

This approach corresponds to adding a spherical Gaussian noise to the training example $\boldsymbol{x}_i$ and, as shown in \cite{chapelle2000vicinal} that it gives rise to various regularization schemes, such as Ridge Regression, Constrained logistic classifier and Tangent-Prop. 

In this paper, we argue that the effectiveness of VRM can be explained by the fact that the vicinal distribution $\widehat{p}$ covers regions of $\mathcal{X}\times\mathcal{Y}$ beyond the support of the empirical distribution $\widetilde{p}$ ({\em i.e.}, the training points). This presents opportunities for the learned model to generalize better on unseen data. Indeed, beyond the original proposal of \cite{chapelle2000vicinal}, more recent regularization schemes may also be viewed as VRM with a special choice of vicinal distribution.

\paragraph{Label Smoothing as VRM}
We will use $\vec{\boldsymbol{1}},\vec{\boldsymbol{2}},\ldots,\vec{\boldsymbol{K}}$ to denote respectively the one-hot vector representations of labels $1,2,\ldots,K$. For any label $k\in\{1,2,\ldots,K\}$, we define the smoothed soft-label distribution $\pi_{k}$ on $\mathcal{Y}$ as
\begin{equation}
\pi_{\vec{k}}(\boldsymbol{y}):=(1-\epsilon)\delta_{\vec{k}}(\boldsymbol{y})+\frac{\epsilon}{K-1}\sum_{j\neq k}\delta_{\vec{j}}(\boldsymbol{y})
\end{equation}

Now define the vicinal distribution $\widehat{p}_{\rm LS}$ on $\mathcal{X}\times\mathcal{Y}$ by
\begin{equation}
\widehat{p}_{\mathrm{LS}}(\boldsymbol{x},\boldsymbol{y}):=\frac{1}{n}\sum_{i=1}^n\delta_{x_i}(\boldsymbol{x})\pi_{y_i}(\boldsymbol{y})
\end{equation}

Under this vicinal distribution, the minimization objective in VRM for label smoothing becomes
\begin{align}
\widehat{\mathcal{L}}_{\mathrm{LS}}
&=\int\ell(f(\boldsymbol{x}),\boldsymbol{y})\widehat{p}_\mathrm{LS}(\boldsymbol{x},\boldsymbol{y}){\rm d}\boldsymbol{x}{\rm d}\boldsymbol{y}\nonumber\\
&=\frac{1}{n}\sum_{i=1}^n\!
     \left[\!(1\!-\!\epsilon)\ell(f(\boldsymbol{x}_i),\boldsymbol{y}_i)\!+\! 
     \frac{\epsilon}{K\!-\!1}\sum_{\vec{\boldsymbol{j}}\neq\boldsymbol{y}_i}\!
     \ell\!\left(f(\boldsymbol{x}_i),\vec{\boldsymbol{j}}\right)\!\right]\nonumber\\
&=\frac{1}{n}\sum\limits_{i=1}^n\ell\left(f(\boldsymbol{x}_i),\widehat{\boldsymbol{y}}_i\right)\label{eq:LS}
\end{align}
where $\widehat{\boldsymbol{y}}_i$ is the $K$-dimensional probability vector that puts probability $1-\epsilon$ to the label represented by $\boldsymbol{y}_i$ and puts probability $\frac{\epsilon}{K-1}$ to each of the remaining labels. We note that the final equality above follows from the fact that the cross-entropy loss $\ell$ is linear in its second argument. It can be verified that the expression in (\ref{eq:LS}) is precisely the loss function minimized in label smoothing \cite{szegedy2016rethinking}. Thus we have proved the following lemma.
\begin{lemma}
Label smoothing is a special case of VRM, the minimization objective of which is specified in the form of (\ref{eq:LS}).
\end{lemma}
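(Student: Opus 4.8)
The plan is to prove the lemma by direct substitution and simplification: I will instantiate the generic VRM objective $\widehat{\mathcal{L}}$ of (\ref{eq:VRM}) with the specific vicinal distribution $\widehat{p}_{\mathrm{LS}}$ and show it collapses, term by term, to the expression (\ref{eq:LS}), which is then recognized as the label smoothing loss of \cite{szegedy2016rethinking}. First I would substitute $\widehat{p}_{\mathrm{LS}}(\boldsymbol{x},\boldsymbol{y})=\frac{1}{n}\sum_{i}\delta_{x_i}(\boldsymbol{x})\pi_{y_i}(\boldsymbol{y})$ into $\int\ell(f(\boldsymbol{x}),\boldsymbol{y})\widehat{p}_{\mathrm{LS}}(\boldsymbol{x},\boldsymbol{y})\,{\rm d}\boldsymbol{x}\,{\rm d}\boldsymbol{y}$ and carry out the $\boldsymbol{x}$-integration first; since each $\delta_{x_i}$ is the Dirac measure at $\boldsymbol{x}_i$, the $\boldsymbol{x}$-integral sifts out $\ell(f(\boldsymbol{x}_i),\boldsymbol{y})$, leaving $\frac{1}{n}\sum_i\int\ell(f(\boldsymbol{x}_i),\boldsymbol{y})\pi_{y_i}(\boldsymbol{y})\,{\rm d}\boldsymbol{y}$.

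Next I would integrate over $\boldsymbol{y}$ against $\pi_{y_i}$. By its definition, $\pi_{\vec{k}}$ is a finite mixture of Dirac measures on the one-hot vectors $\vec{\boldsymbol{1}},\ldots,\vec{\boldsymbol{K}}$, placing mass $1-\epsilon$ on $\vec{\boldsymbol{k}}$ and mass $\frac{\epsilon}{K-1}$ on each of the remaining one-hot vectors, so the $\boldsymbol{y}$-integral becomes the finite sum $(1-\epsilon)\ell(f(\boldsymbol{x}_i),\boldsymbol{y}_i)+\frac{\epsilon}{K-1}\sum_{\vec{\boldsymbol{j}}\neq\boldsymbol{y}_i}\ell(f(\boldsymbol{x}_i),\vec{\boldsymbol{j}})$, i.e.\ the middle line of the display preceding the lemma.

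The one step carrying actual content—rather than mere unfolding of definitions—is collapsing this convex combination of cross-entropy losses into a single cross-entropy loss evaluated at the smoothed label $\widehat{\boldsymbol{y}}_i$. Here I would invoke the fact that $\ell(\boldsymbol{q},\cdot)$ is linear in its second argument: writing $\ell(\boldsymbol{q},\boldsymbol{y})=-\sum_k y_k\log q_k$ makes it immediate that $\ell\bigl(\boldsymbol{q},\sum_k\alpha_k\boldsymbol{e}_k\bigr)=\sum_k\alpha_k\,\ell(\boldsymbol{q},\boldsymbol{e}_k)$ for any weights $\alpha_k$. Applying this identity with the weights of $\pi_{y_i}$ reinterprets the finite sum above as $\ell\bigl(f(\boldsymbol{x}_i),\widehat{\boldsymbol{y}}_i\bigr)$, where $\widehat{\boldsymbol{y}}_i$ is the probability vector putting $1-\epsilon$ on the true class of $\boldsymbol{x}_i$ and $\frac{\epsilon}{K-1}$ on every other class, yielding exactly (\ref{eq:LS}).

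Finally I would close by remarking that (\ref{eq:LS}) is verbatim the objective minimized by label smoothing in \cite{szegedy2016rethinking}, so choosing $\widehat{p}=\widehat{p}_{\mathrm{LS}}$ realizes label smoothing within the VRM framework, establishing the lemma. I do not anticipate a genuine obstacle; the only point deserving explicit emphasis is that the \emph{linearity} (not mere convexity) of the cross-entropy in its label argument is what upgrades the last step from an inequality to an exact identity, and hence what makes label smoothing a bona fide special case of VRM rather than an approximation of one.
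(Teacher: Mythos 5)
Your proposal is correct and follows essentially the same route as the paper: substitute $\widehat{p}_{\mathrm{LS}}$ into the VRM objective, sift out the Dirac measures, and use the linearity of the cross-entropy in its second argument to collapse the convex combination of losses into $\ell(f(\boldsymbol{x}_i),\widehat{\boldsymbol{y}}_i)$, which is recognized as the label smoothing objective. Your explicit remark that linearity (not mere convexity) is what makes the final step an exact identity is a nice clarification of the point the paper states only in passing.
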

To the best of our knowledge, this connection between label smoothing and VRM as suggested in the lemma is reported for the first time. 

\paragraph{MixUp as VRM}
MixUp may also be viewed as VRM with a particular choice of vicinal distribution. This has been shown in \cite{zhang2018mixup}, which we now recapitulate. 

For the convenience of notation, for any two vectors $\boldsymbol{s}$ and $\boldsymbol{s}'$ having the same length and any scalar $\lambda\in[0,1]$, we denote
\begin{equation}
\mathbb{M}(\boldsymbol{s},\boldsymbol{s}';\lambda):=\lambda\boldsymbol{s}+(1-\lambda)\boldsymbol{s}'
\end{equation}

For any given pair $(\boldsymbol{x},\boldsymbol{y})\in\mathcal{X}\times\mathcal{Y}$, if we draw a random variable pair $(\boldsymbol{X}',\boldsymbol{Y}')$ from $\widetilde{p}$ and draw a random variable $\Lambda$ from a beta distribution $\text{Beta}(\alpha,\alpha)$ for some prescribed $\alpha$, then $\mathbb{M}(\boldsymbol{x},\boldsymbol{X}';\Lambda)$ and $\mathbb{M}(\boldsymbol{y},\boldsymbol{Y}';\Lambda)$ are a pair of random variables, and we will use $q(\cdot|\boldsymbol{x},\boldsymbol{y})$ to denote their joint distribution. Note that for any given $(\boldsymbol{x},\boldsymbol{y})\in\mathcal{X}\times\mathcal{Y}$, $q(\cdot|\boldsymbol{x},\boldsymbol{y})$ is a distribution over $\mathcal{X}\times\mathcal{Y}$. Now we define a vicinal distribution $\widehat{p}_{\rm MixUp}$ as follows: for any $(\boldsymbol{x},\boldsymbol{y})\in\mathcal{X}\times\mathcal{Y}$
\begin{equation}
\widehat{p}_\mathrm{MixUp}(\boldsymbol{x},\boldsymbol{y}):=\int q(\boldsymbol{x},\boldsymbol{y}|\boldsymbol{x}',\boldsymbol{y}')\widetilde{p}(\boldsymbol{x}',\boldsymbol{y}'){\rm d}\boldsymbol{x}'{\rm d}\boldsymbol{y}'
\end{equation}

Then under VRM, the minimization objective is 
\begin{align}
 &\widehat{\mathcal{L}}_\mathrm{MixUp}\nonumber\\
=&\int\ell(f(\boldsymbol{x}),\boldsymbol{y})\widehat{p}_{\mathrm{MixUp}}(\boldsymbol{x},\boldsymbol{y}){\rm d}\boldsymbol{x}{\rm d}\boldsymbol{y}\nonumber\\
=&\mathbb{E}_{\lambda\sim\text{Beta}(\alpha,\alpha)}\frac{1}{n^2} \sum_{i=1}^n\!\sum_{j=1}^n\!
\ell\!\left(f\!\left(\mathbb{M}(\boldsymbol{x}_i,\boldsymbol{x}_j;\lambda)\right),\mathbb{M}(\boldsymbol{y}_i,\boldsymbol{y}_j;\lambda)\right)
\end{align}

Now it is easy to see that MixUp essentially takes a stochastic approximation of loss $\widehat{\mathcal{L}}_{\mathrm{MixUp}}$ by sampling a finite number of $\lambda$ values and uses a mini-batch SGD to minimize such an approximated loss. 

At this end, we have shown that the formulation of VRM unifies various effective regularization schemes. The following observations are particularly remarkable. In the original VRM schemes where the vicinal distribution is chosen as $\widehat{p}_{\rm CWBV}$, the extension of the training points $\mathcal{D}$ is only on the $\mathcal{X}$ dimension; namely, each training example $\boldsymbol{x}_i$ is perturbed while preserving its label. In label smoothing, the vicinal distribution $\widehat{p}_{\rm LS}$ extends $\mathcal{D}$ only along the $\mathcal{Y}$ dimension, namely, the location of each $\boldsymbol{x}_i$ is kept but its training label $\boldsymbol{y}_i$ is ``smeared". In MixUp, the extension of the training points $\mathcal{D}$ by the vicinal distribution $\widehat{p}_{\rm MixUp}$ is on both the $\mathcal{X}$ dimension and the $\mathcal{Y}$ dimension. This unique feature of MixUp, which exploits the freedom in the joint space $\mathcal{X}\times\mathcal{Y}$ for moving a training point, is arguably critically attributed to its superior performance relative to other regularization schemes.

We now develop a new regularization scheme that is principled by VRM and exploits the joint space $\mathcal{X}\times\mathcal{Y}$ like MixUp. But unlike MixUp, this scheme is not only designed for regularization but also serve an adversarial training purpose for the ``robust regularization'' objective.

\section{Adversarial Labelling of Perturbed Samples}

Let $\widetilde{p}_\mathcal{X}$ be the marginal of the empirical distribution $\widetilde{p}$ on $\mathcal{X}$. For any $y\in\mathcal{Y}$ and small scalar $\epsilon>0$, we use $\mathcal{B}(\boldsymbol{y};\epsilon)$ to denote the $\text{L}_1$-ball in $\mathcal{Y}$ having radius $\epsilon$ and centred at $\boldsymbol{y}$. We will use $\text{Vol}\left(\mathcal{B}(\boldsymbol{y};\epsilon)\right)$ to denote the volume of $\mathcal{B}(\boldsymbol{y};\epsilon)$. 

For any given $(\boldsymbol{x},\boldsymbol{y})\in\mathcal{X}\times\mathcal{Y}$, consider the following process of constructing a distribution $\gamma(\cdot|\boldsymbol{x},\boldsymbol{y})$ on $\mathcal{X}\times\mathcal{Y}$. First draw random variable $\boldsymbol{X}'$ from the marginal distribution $\widetilde{p}_\mathcal{X}$, and draw random variable $\Lambda$ from Beta distribution $\text{Beta}(\alpha,\beta)$. Then create a pair $(\widehat{\boldsymbol{X}},\widehat{\boldsymbol{Y}})$ of random variables as follows.
\begin{equation}
\widehat{\boldsymbol{X}}:=\mathbb{M}(\boldsymbol{x},\boldsymbol{X}',\Lambda)\qquad\widehat{\boldsymbol{Y}}:=\boldsymbol{y}+\Delta
\end{equation}
where $\Delta$ is uniformly distributed in ball $\mathcal{B}(\boldsymbol{0};\epsilon)$. In our implementation, we set the mode of Beta distribution $\frac{\alpha-1}{\alpha+\beta-2}$ to a value close to 1. Denote the joint distribution of $(\widehat{\boldsymbol{X}},\widehat{\boldsymbol{Y}})$ by $\gamma(\cdot|\boldsymbol{x},\boldsymbol{y})$, and define ALPS vicinal distribution $\widehat{p}_{\rm ALPS}$ as follows: for each $(\boldsymbol{x},\boldsymbol{y})\in\mathcal{X}\times\mathcal{Y}$, 
\begin{equation}
\widehat{p}_{\rm ALPS}(\boldsymbol{x},\boldsymbol{y}):=\int\gamma(\boldsymbol{x},\boldsymbol{y}|\boldsymbol{x}',\boldsymbol{y}')\widetilde{p}(\boldsymbol{x}',\boldsymbol{y}'){\rm d}\boldsymbol{x}'{\rm d}\boldsymbol{y}'
\end{equation}

Under this vicinal distribution, the minimization objective of VRM is 
\begin{align}
&\widehat{\mathcal L}_{\rm ALPS}:=\int\ell\left(f(\boldsymbol{x}),\boldsymbol{y}\right)\widehat{p}_{\rm ALPS}(\boldsymbol{x},\boldsymbol{y}){\rm d}\boldsymbol{x}{\rm d}\boldsymbol{y}\nonumber\\
&=\!\mathbb{E}_\lambda\frac{1}{n^2}\!\sum_{i=1}^n\!\sum_{j=1}^n\!
\frac{1}{\text{Vol}\left(\mathcal{B}(\boldsymbol{y}_i;\!\epsilon)\right)}
\!\!\!\!\!\!
\mathop{\int}_{\boldsymbol{y}\in\mathcal{B}(\boldsymbol{y}_i;\epsilon)}
\!\!\!\!\!\!
\ell\!\left(f\!\left(\mathbb{M}(\boldsymbol{x}_i,\boldsymbol{x}_j;\!\lambda\right)\!,\!\boldsymbol{y}\right)\!{\rm d}\boldsymbol{y}\nonumber\\
&\le\!\mathbb{E}_{\lambda}\frac{1}{n^2}\!\sum_{i=1}^n\!\sum_{j=1}^n\!
\max_{\boldsymbol{y}\in\mathcal{B}(\boldsymbol{y}_i;\epsilon)} 
\!\!\!\ell\!\left(f\!\left(\mathbb{M}(\boldsymbol{x}_i,\boldsymbol{x}_j;\!\lambda\right)\!,\!\boldsymbol{y}\right)\!:=\!\mathcal{J}_{\rm ALPS}\!\! \label{eq:ALPS_final_loss}
\end{align}

Note that in (\ref{eq:ALPS_final_loss}), we turn to an upper bound $\mathcal{J}_{\rm ALPS}$ of the desired minimization objective $\widehat{\mathcal L}_{\rm ALPS}$. This is because it is in general intractable to compute the integral of the loss over the ball $\mathcal{B}(\boldsymbol{y}_i;\epsilon)$. But turning the VRM formulation to minimizing the upper bound $\mathcal{J}_{\rm ALPS}$ may be well justified as follows. 

First, when the radius $\epsilon$ is small, the upper bound is expected to be quite tight. Second, even when the bound is not sufficiently tight, minimizing $\mathcal{J}_{\rm ALPS}$ to a sufficiently small value, say $\eta$, necessarily drives the true objective $\widehat{\mathcal L}_{\rm ALPS}$ to a value no larger than $\eta$. Finally, using the upper bound as the minimization objective renders the training process an ``adversarial'' flavour, and one expects some level of adversarial robustness may be obtained from such minimization. 

To minimize $J_{\rm ALPS}$, one can approximate the expectation over $\lambda$ by its stochastic approximation, and minimizing the double sum can be carried out efficiently using a mini-batch SGD (just as that in MixUp). Concerning the inner maximization, there is an elegant property as shown below.

\begin{theorem}
For an arbitrary training example $(\boldsymbol{x},\boldsymbol{y})\in\mathcal{D}$. Let $\widetilde{\boldsymbol{x}}$ be a perturbed version of $\boldsymbol{x}$, and $k^\ast$ be the least likely label assignment according to the predictive distribution $f(\widetilde{\boldsymbol{x}})$. Define $\rho(\widetilde{\boldsymbol{x}},\boldsymbol{y})\in\mathcal{Y}$ by
$\rho(\widetilde{\boldsymbol{x}},\boldsymbol{y}):=(1-\frac{\epsilon}2)\boldsymbol{y}+\frac{\epsilon}2\vec{\boldsymbol{k}^\ast}$, then
\begin{equation*}
\rho(\widetilde{\boldsymbol{x}},\boldsymbol{y})=\arg\max_{\Vert\widetilde{\boldsymbol{y}}-{\boldsymbol{y}}\Vert_1\le\epsilon}
\ell(f(\widetilde{\boldsymbol{x}}),\widetilde{\boldsymbol{y}})
\end{equation*}
\end{theorem}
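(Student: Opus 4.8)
The plan is to exploit the fact that the cross-entropy loss $\ell(f(\widetilde{\boldsymbol{x}}),\widetilde{\boldsymbol{y}})=-\sum_{k=1}^{K}\widetilde{y}_k\log f_k(\widetilde{\boldsymbol{x}})$ is \emph{linear} in its second argument. Hence the inner problem is a linear program over the convex set $\{\widetilde{\boldsymbol{y}}\in\mathcal{Y}:\|\widetilde{\boldsymbol{y}}-\boldsymbol{y}\|_1\le\epsilon\}$, whose optimum is attained at an extreme point. Writing $a_k:=-\log f_k(\widetilde{\boldsymbol{x}})\ge 0$, so that $k^\ast=\arg\min_k f_k(\widetilde{\boldsymbol{x}})=\arg\max_k a_k$, the objective is $\sum_k\widetilde{y}_k a_k$, and the goal is to show that the maximizer is $\rho(\widetilde{\boldsymbol{x}},\boldsymbol{y})$.

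First I would substitute $\widetilde{\boldsymbol{y}}=\boldsymbol{y}+\boldsymbol{\delta}$ and push the constraints onto $\boldsymbol{\delta}$. Because $\boldsymbol{y}$ is one-hot — say its mass sits on the true class $c$, i.e.\ $\boldsymbol{y}=\vec{\boldsymbol{c}}$ — membership of $\widetilde{\boldsymbol{y}}$ in the probability simplex $\mathcal{Y}$ forces $\delta_k\ge 0$ for all $k\neq c$, forces $\delta_c\ge -1$, and (sum-to-one) forces $\sum_k\delta_k=0$, i.e.\ $\delta_c=-\sum_{k\neq c}\delta_k$. Consequently $\|\boldsymbol{\delta}\|_1=|\delta_c|+\sum_{k\neq c}\delta_k=2\sum_{k\neq c}\delta_k$, so the $\text{L}_1$ constraint is equivalent to $\sum_{k\neq c}\delta_k\le\epsilon/2$. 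This is the crucial bookkeeping step: the one-hot geometry splits the $\text{L}_1$ budget evenly between ``removing'' mass from coordinate $c$ and ``adding'' mass elsewhere, which is exactly where the factor $\epsilon/2$ in $\rho$ originates.

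With this reduction the objective is $\sum_k\widetilde{y}_k a_k=a_c+\sum_k\delta_k a_k=a_c+\sum_{k\neq c}\delta_k(a_k-a_c)$, a linear functional of $(\delta_k)_{k\neq c}$ over the scaled simplex $\{\delta_k\ge 0,\ \sum_{k\neq c}\delta_k\le\epsilon/2\}$. Since $a_{k^\ast}-a_c=\max_k a_k-a_c\ge a_k-a_c$ for every $k$, and this quantity is nonnegative, the maximum is achieved at the vertex $\delta_{k^\ast}=\epsilon/2$, $\delta_k=0$ for $k\notin\{c,k^\ast\}$, and hence $\delta_c=-\epsilon/2$. Translating back, the maximizer equals $\boldsymbol{y}-\tfrac{\epsilon}{2}\vec{\boldsymbol{c}}+\tfrac{\epsilon}{2}\vec{\boldsymbol{k}^\ast}=(1-\tfrac{\epsilon}{2})\boldsymbol{y}+\tfrac{\epsilon}{2}\vec{\boldsymbol{k}^\ast}=\rho(\widetilde{\boldsymbol{x}},\boldsymbol{y})$, as claimed.

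The constraint translation and the elementary LP step are routine; the part to get right is the feasible-region analysis — in particular recognizing that one must invoke the full probability-simplex structure of $\mathcal{Y}$ (not merely the $\text{L}_1$ ball in $\mathbb{R}^K$), since it is nonnegativity of the off-diagonal coordinates that forces the subtracted mass onto coordinate $c$ and thereby pins down the form of $\rho$. I would also flag the degenerate cases: if $k^\ast=c$ then every coefficient $a_k-a_c$ is $\le 0$, the optimum is $\boldsymbol{\delta}=\boldsymbol{0}$, and the formula still returns $\rho(\widetilde{\boldsymbol{x}},\boldsymbol{y})=\boldsymbol{y}$; and if $\arg\min_k f_k(\widetilde{\boldsymbol{x}})$ is not unique, the $\arg\max$ in the statement is a face rather than a point, so the claim should be read as exhibiting one maximizer (any admissible choice of $k^\ast$ yields a valid $\rho$).
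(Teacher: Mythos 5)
Your proof is correct and follows essentially the same route as the paper's: both exploit linearity of the cross-entropy in its label argument to reduce the inner problem to a linear program over the intersection of the probability simplex with the $\text{L}_1$-ball, derive the $\epsilon/2$ split of the budget from the one-hot geometry, and identify the optimal vertex as the one shifting mass onto $\vec{\boldsymbol{k}^\ast}$. If anything, your version is more careful — you explicitly derive the fact that $\widetilde{\boldsymbol{y}}[c]=1-\epsilon'/2$ from the simplex constraints (which the paper merely asserts ``can be verified'') and you flag the degenerate and non-unique cases, which the paper omits.
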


\begin{proof}
Obviously, for $\forall\widetilde{\boldsymbol{y}}\in\mathcal{B}(\boldsymbol{y};\epsilon)$, there always $\exists\epsilon'$ satisfy $\epsilon'\le\epsilon$ and $\Vert\widetilde{\boldsymbol{y}}-\boldsymbol{y}\Vert_1=\epsilon'$.
Without loss of the generality, assuming that $\boldsymbol{y}=\vec{\boldsymbol{k}}$, it can then be verified that $\widetilde{\boldsymbol{y}}[k]=1-\frac{\epsilon'}2$.
Then it is possible to show:
\begin{align}
\ell(f(\widetilde{\boldsymbol{x}}),\widetilde{\boldsymbol{y}})&=\sum_{k'\ne k}\widetilde{\boldsymbol{y}}[k']\ell(f(\widetilde{\boldsymbol{x}}),\vec{\boldsymbol{k}}')+(1-\frac{\epsilon'}2)\ell(f(\widetilde{\boldsymbol{x}}),\vec{\boldsymbol{k}})\nonumber\\
&\le\frac{\epsilon'}2\ell(f(\widetilde{\boldsymbol{x}}),\vec{\boldsymbol{k}^\ast})+(1-\frac{\epsilon'}2)\ell(f(\widetilde{\boldsymbol{x}}),\vec{\boldsymbol{k}})
\end{align}

If we continue to slacken the $\text{L}_1$-ball radius $\epsilon'$ until $\epsilon$, $\ell(f(\widetilde{\boldsymbol{x}}),\widetilde{\boldsymbol{y}})$ will reach its supremum.
\begin{align}
&\sup_{\epsilon'\le\epsilon}\left\{\frac{\epsilon'}2\ell(f(\widetilde{\boldsymbol{x}}),\vec{\boldsymbol{k}^\ast})+(1-\frac{\epsilon'}2)\ell(f(\widetilde{\boldsymbol{x}}),\vec{\boldsymbol{k}})\right\}\nonumber\\
=&\frac{\epsilon}2\ell(f(\widetilde{\boldsymbol{x}}),\vec{\boldsymbol{k}^\ast})+(1-\frac{\epsilon}2)\ell(f(\widetilde{\boldsymbol{x}}),\vec{\boldsymbol{k}})\nonumber\\
=&\ell(f(\widetilde{\boldsymbol{x}}),(1-\frac{\epsilon}2)\vec{\boldsymbol{k}}+\frac{\epsilon}2\vec{\boldsymbol{k}^\ast})\nonumber\\
=&\ell(f(\widetilde{\boldsymbol{x}}),\rho(\widetilde{\boldsymbol{x}},\boldsymbol{y}))
\end{align}
this suggests that $\widetilde{\boldsymbol{y}}=\rho(\widetilde{\boldsymbol{x}},\boldsymbol{y})$ is the maximizer of $\ell(f(\widetilde{\boldsymbol{x}}),\widetilde{\boldsymbol{y}})$ given the perturbed example $\widetilde{\boldsymbol{x}}$.
\end{proof}

The theorem suggests that the inner maximization in (\ref{eq:ALPS_final_loss}) has the ``adversarial label'' $\rho(\widetilde{\boldsymbol{x}},\boldsymbol{y})$ as its closed-form solution. Note that $\rho(\widetilde{\boldsymbol{x}},\boldsymbol{y})$ is merely the label distribution obtained by shifting probability $\epsilon/2$ from the ground-truth label to the least likely label. The minimizing $\mathcal{J}_{\rm ALPS}$ reduces to
\begin{align}
&\min\mathbb{E}_{\lambda}
\frac{1}{n^2}\!\sum_{i=1}^{n}\!\sum_{j=1}^n\!\ell\!\left(f\!\left(\mathbb{M}(\boldsymbol{x}_i,\boldsymbol{x}_j;\lambda)\right)\!,\rho(\mathbb{M}(\boldsymbol{x}_i,\boldsymbol{x}_j;\lambda),\boldsymbol{y}_i)\right)\nonumber\\
\approx&
\min\frac{1}{|\mathcal{S}|}\sum_{(\boldsymbol{x}_i,\boldsymbol{x}_j)\in\mathcal{S}}\ell\!\left(f\!\left(\widetilde{\boldsymbol{x}}_{ij}\right),\rho(\widetilde{\boldsymbol{x}}_{ij},\boldsymbol{y}_i)\right)
\end{align}
where the above stochastic approximation uses one batch $\mathcal{S}$ of training example pairs, and for example pair $(\boldsymbol{x}_i,\boldsymbol{x}_j)$, we have used $\widetilde{\boldsymbol{x}}_{ij}$ to denote the mix $\mathbb{M}(\boldsymbol{x}_i,\boldsymbol{x}_j;\lambda_{ij})$ with $\lambda_{ij}$ drawn from $\text{Beta}(\alpha,\beta)$. The gradient signal of this approximation can then be used to update the model parameters as usual.

\begin{algorithm}[htb] 
\caption{ALPS Regularized Training} 
\label{alg:Framwork} 
\begin{algorithmic}[1] %
\REQUIRE The training set $\mathcal{D}=\{(\boldsymbol{x}_i,\boldsymbol{y}_i)\}_{i=1}^n$, the neural network $f(\cdot)$, the number of epochs $E$, the hyper-parameters of beta distribution $\alpha,\beta$, and the radius $\epsilon$ of permitted perturbation over label space $\mathcal{Y}$
\FOR{$e=1$ to $E$}
\FOR{$i=1$ to $n$}
\STATE Sample $\boldsymbol{x}'\sim\mathcal{D}$ uniformly, sample $\lambda\sim\text{Beta}(\alpha,\beta)$;\\
\STATE Let $\widetilde{\boldsymbol{x}}=\mathbb{M}(\boldsymbol{x}_i,\boldsymbol{x}';\lambda)$;\\
\STATE Get the least likely label $k^\ast$ predicted from $f(\widetilde{\boldsymbol{x}})$, and assign $\rho=(1-\frac{\epsilon}2){\boldsymbol{y}}_i+\frac{\epsilon}2\vec{\boldsymbol{k}^\ast}$;\\
\STATE Use the vicinal sample $(\widetilde{\boldsymbol{x}},\rho)$ to compute the loss gradient and update the parameters;
\ENDFOR
\ENDFOR
\end{algorithmic}
\end{algorithm}

\paragraph{Remarks on ALPS's robust regularization effects}
Note that an overfitted model tends to carve its class boundaries too close to the training examples, and its corresponding loss landscape tends to change sharply near the training points. This is mitigated in ALPS by training the model with synthetic data points that have moved away from the data manifold in the joint input-label space. This aspect is similar to MixUp. On the other hand, the movement in the label space is designed in an adversarial manner, namely, towards the direction of increasing the loss. This aspect is similar to adversarial training. Thus ALPS is expected to work both as a data-dependent regularization scheme and as an adversarial training scheme.

\section{Experiments}

Here we experimentally compare the regularization performance of ALPS with other popular regularization schemes. Adversarial robustness of the trained models is also evaluated and compared. We select several popular regularization schemes, including Dropout, Label smoothing, MixUp, and vanilla ERM as baselines to compare with ALPS. Weight decay is applied to all schemes.

\begin{table}[t]
\centering
\resizebox{0.8\columnwidth}{!}{
\begin{tabular}{lrr}
\toprule
Method & \multicolumn{1}{c}{PreActResNet18} & \multicolumn{1}{c}{VGG16} \\
\midrule
ERM & 4.06$\pm$0.063 & 4.21$\pm$0.060 \\
Dropout & 3.80$\pm$0.065 & 3.85$\pm$0.049 \\
Label Smoothing & 3.31$\pm$0.087 & 3.70$\pm$0.070 \\
MixUp & 3.35$\pm$0.044 & 3.60$\pm$0.057 \\
ALPS & {\bf 3.05$\pm$0.045} & {\bf 3.48$\pm$0.051} \\
\bottomrule
\end{tabular}
}%
\caption{Test error (\%) on SVHN.}
\label{tab:svhn}
\end{table}

\begin{table}[t]
\centering
\resizebox{0.8\columnwidth}{!}{
\begin{tabular}{lrr}
\toprule
Method & \multicolumn{1}{c}{PreActResNet18} & \multicolumn{1}{c}{VGG16} \\
\midrule
ERM & 6.13$\pm$0.212 & 6.75$\pm$0.193 \\
Dropout & 5.91$\pm$0.148 & 6.42$\pm$0.147 \\
Label Smoothing & 5.66$\pm$0.135 & 6.63$\pm$0.157 \\
MixUp & 4.22$\pm$0.117 & 5.46$\pm$0.112 \\
ALPS & {\bf 4.15$\pm$0.129} & {\bf 5.19$\pm$0.144} \\
\bottomrule
\end{tabular}
}
\caption{Test error (\%) on CIFAR-10.}
\label{tab:cifar10}
\end{table}

\begin{table}[t]
\centering
\resizebox{0.82\columnwidth}{!}{
\begin{tabular}{lrr}
\toprule
Method & \multicolumn{1}{c}{PreActResNet18} & \multicolumn{1}{c}{VGG16} \\
\midrule
ERM & 25.97$\pm$0.303 & 28.87$\pm$0.297 \\
Dropout & 25.46$\pm$0.351 & 27.84$\pm$0.337 \\
Label Smoothing & 24.85$\pm$0.256 & 28.69$\pm$0.179 \\
MixUp & 22.07$\pm$0.210 & 26.50$\pm$0.254 \\
ALPS & {\bf 21.75$\pm$0.208} & {\bf 26.18$\pm$0.233} \\
\bottomrule
\end{tabular}
}
\caption{Test error (\%) on CIFAR-100.}
\label{tab:cifar100}
\end{table}

\begin{table}[t]
\centering
\resizebox{0.98\columnwidth}{!}{
\begin{tabular}{lrr}
\toprule
Method & \multicolumn{1}{c}{PreActResNet18} & \multicolumn{1}{c}{VGG16} \\
\midrule
ERM & 39.05$\pm$0.151\ /\ 18.25 & 41.74$\pm$0.211\ /\ 19.79 \\
Dropout & 37.52$\pm$0.149\ /\ 16.81 & 38.93$\pm$0.182\ /\ 19.07 \\
Label Smoothing & 38.41$\pm$0.136\ /\ 19.46 & 39.09$\pm$0.174\ /\ 19.75 \\
MixUp & 37.05$\pm$0.119\ /\ 16.24 & 38.29$\pm$0.140\ /\ 18.35 \\
ALPS & {\bf 36.23$\pm$0.121}\ /\ {\bf 15.42} & {\bf 38.17$\pm$0.135}\ /\ {\bf 18.04} \\
\bottomrule
\end{tabular}
}
\caption{Test error (\%) on Tiny-ImageNet. The first number and the second number are the top-1 and top-5 error rates, respectively.}
\label{tab:tinyimagenet}
\end{table}

\subsection{Experimental Setup}

Four publicly available benchmark datasets are used. The SVHN dataset has 10 classes for the digit numbers, and 73257 training digits, 26032 test digits. The CIFAR-10 and CIFAR-100 dataset have the same image set but different label strategy. CIFAR-10 has 10 classes containing 6000 images each, and CIFAR-100 has 100 classes containing 600 images each. The two datasets are split with 5:1 for training and testing per class. The Tiny-ImageNet dataset has 200 classes. Each class has 500 training images, 50 validation images, and 50 test images. Since the test set does not provide true labels, we take the validation set for testing.

Besides the vanilla empirical risk minimization (ERM), we select three regularization methods to compare with ALPS, {\em i.e.}, Dropout, label smoothing, and MixUp.
PreActResNet18 \cite{he2016identity} and VGG16 \cite{simonyan2014very} are employed as classification model architecture. In training processes, we adopt the geometrical transformation augmentation methods, random crops and horizontal flips. Normalization is performed on both training and test set with the mean and standard derivation computed on training set.

We follow the training procedure of \cite{zhang2018mixup}, where SGD with momentum optimizer is used with a step-wise learning rate decay. Weight decay factor is set to $10^{-4}$. For Dropout, we randomly deactivate 50\% of the neurons in the penultimate layer at each iteration. The label smoothing ratio is set to $0.1$. The Beta hyper-parameter of MixUp $\alpha$ is set to $1$. For ALPS, the mode of asymmetric Beta distribution is chosen in the range $(0.75,1)$, and the $\text{L}_1$ ball constraint $\epsilon$ in adversarial labelling is restricted to $(0,0.5)$ empirically.

\subsection{Regularization Performance}

On SVHN, CIFAR-10 and CIFAR-100 datasets, top-1 test error is used to measure the generalization ability of PreActResNet18 and VGG16 networks trained with the compared regularization schemes. The corresponding results are reported in Table~\ref{tab:svhn}, Table~\ref{tab:cifar10}, and Table~\ref{tab:cifar100} respectively. ALPS outperforms other baselines generally. On average, ALPS trained the two models' top-1 error are improved by about $1\%$ on SVHN, $2\%$ on CIFAR-10, and $3.5\%$ on CIFAR-100 respectively, in contrast to ERM baseline. The performance of MixUp is close with and slightly worse than that of ALPS. On the Tiny-ImageNet, top-1 and top-5 errors are used to measure regularization performance. The results are summarized in Table~\ref{tab:tinyimagenet}. We also find that ALPS achieves the best performance compared with all other baseline methods. Especially for PreActResNet18 model, the performance of ALPS is higher than the competitive MixUp method, and gains an increase of $0.8\%$ in both top-1 and top-5 error compared with it. These results confirm that ALPS is a state-of-art regularization scheme.

\begin{table*}[t]
\centering
\resizebox{0.98\textwidth}{!}{
\begin{tabular}{lrrrr|rrrr}
\toprule
& \multicolumn{4}{c}{white-box} & \multicolumn{4}{c}{black-box} \\
Method & \multicolumn{1}{c}{SVHN} & \multicolumn{1}{c}{CIFAR-10} & \multicolumn{1}{c}{CIFAR-100} & \multicolumn{1}{c}{Tiny-ImageNet} & \multicolumn{1}{c}{SVHN} & \multicolumn{1}{c}{CIFAR-10} & \multicolumn{1}{c}{CIFAR-100} & \multicolumn{1}{c}{Tiny-ImageNet} \\
\midrule
ERM & 34.70$\pm$0.569 & 39.02$\pm$1.908 & 68.67$\pm$0.699 & 87.96$\pm$0.735 & 10.37$\pm$0.341 & 12.41$\pm$0.916 & 37.99$\pm$0.750 & 50.34$\pm$0.683 \\
Dropout & 34.23$\pm$0.591 & 38.41$\pm$0.844 & 65.70$\pm$0.549 & \underline{86.28$\pm$0.614} & 9.94$\pm$0.472 & 12.20$\pm$0.900 & 35.94$\pm$0.536 & 45.70$\pm$0.702 \\
Label Smoothing & \underline{19.28$\pm$1.674} & \underline{27.38$\pm$0.427} & \underline{63.26$\pm$0.410} & 87.31$\pm$0.436 & 9.11$\pm$1.772 & 10.86$\pm$0.507 & 34.29$\pm$0.364 & 48.80$\pm$0.499 \\
MixUp & 27.60$\pm$0.637 & 31.96$\pm$0.384 & 68.33$\pm$0.498 & 87.59$\pm$0.417 & \underline{7.92$\pm$0.212} & \underline{9.79$\pm$0.340} & \underline{33.74$\pm$0.477} & \underline{44.91$\pm$0.314} \\
ALPS & {\bf 15.85$\pm$0.965} & {\bf 24.00$\pm$0.572} & {\bf 56.93$\pm$0.515} & {\bf 83.78$\pm$0.684} & {\bf 6.46$\pm$0.415} & {\bf 7.91$\pm$0.576} & {\bf 32.84$\pm$0.237} & {\bf 43.80$\pm$0.410} \\
\bottomrule
\end{tabular}
}
\caption{Test error (\%) on white-box and black-box FGSM attacks.}
\label{tab:fgsm}
\end{table*}

\begin{table*}[t]
\centering
\resizebox{0.98\textwidth}{!}{
\begin{tabular}{lrrrr|rrrr}
\toprule
& \multicolumn{4}{c}{white-box} & \multicolumn{4}{c}{black-box} \\
Method & \multicolumn{1}{c}{SVHN} & \multicolumn{1}{c}{CIFAR-10} & \multicolumn{1}{c}{CIFAR-100} & \multicolumn{1}{c}{Tiny-ImageNet} & \multicolumn{1}{c}{SVHN} & \multicolumn{1}{c}{CIFAR-10} & \multicolumn{1}{c}{CIFAR-100} & \multicolumn{1}{c}{Tiny-ImageNet} \\
\midrule
ERM & 54.34$\pm$1.085 & 58.33$\pm$2.296 & 83.03$\pm$0.770 & 94.34$\pm$0.637 & 14.14$\pm$0.993 & 15.98$\pm$1.784 & 41.29$\pm$0.892 & 48.98$\pm$0.571 \\
Dropout & 55.46$\pm$1.346 & 59.19$\pm$1.088 & \underline{79.15$\pm$1.081} & \underline{92.48$\pm$0.671} & 12.67$\pm$1.122 & 16.17$\pm$1.003 & 39.82$\pm$0.909 & 45.35$\pm$0.723 \\
Label Smoothing & \underline{34.20$\pm$2.005} & {\bf 42.78$\pm$0.443} & 79.65$\pm$0.700 & 93.47$\pm$0.813 & 11.82$\pm$2.233 & 13.80$\pm$0.853 & 40.24$\pm$0.936 & 47.82$\pm$0.607 \\
MixUp & 84.80$\pm$2.129 & 71.02$\pm$1.018 & 88.80$\pm$0.596 & 96.48$\pm$1.074 & \underline{11.27$\pm$1.682} & \underline{12.78$\pm$1.122} & \underline{35.85$\pm$0.417} & \underline{44.10$\pm$0.722} \\
ALPS & {\bf 28.53$\pm$0.960} & \underline{44.55$\pm$0.650} & {\bf 68.34$\pm$0.488} & {\bf 91.34$\pm$0.593} & {\bf 7.29$\pm$0.703} & {\bf 8.55$\pm$0.584} & {\bf 33.79$\pm$0.470} & {\bf 42.79$\pm$0.633} \\
\bottomrule
\end{tabular}
}
\caption{Test error (\%) on white-box and black-box PGD attacks.}
\label{tab:pgd}
\end{table*}

\subsection{Robustness to Adversarial Attacks}

Training a model with out-of-manifold samples as well as label adversaries, ALPS is expected to substantially improved the adversarial robustness of the trained model. To validate this, we consider the off-the-shelf Fast Gradient Sign Method (FGSM) \cite{goodfellow2014explaining} and Projected Gradient Decent method (PGD) \cite{madry2018towards}, to construct adversarial examples. Then, we train the PreActResNet18 model with ALPS and all the baseline methods, and evaluate the robustness against the two kinds of attacks in both white-box and black-box settings. 

For the setting of white-box adversarial attacks, we use the trained models themselves to generate attack samples. The top-1 test error is also adopted as the evaluation metric. The observed best and second-best performances are marked with bold font and underline respectively. Table~\ref{tab:fgsm} (left) lists the FGSM attack experiment results with perturbation radius $\epsilon=4$ for each pixel. It can be observed that ALPS generally outperforms other regularization baselines, with the gaps of $3.43\%$ on the SVHN dataset, $3.38\%$ on the CIFAR-10 dataset, $6.33\%$ on the CIFAR-100 dataset, and $2.50\%$ on the Tiny-ImageNet dataset compared with the correspondingly second-best schemes. In Table~\ref{tab:pgd} (left), ALPS also manifests superior performance in defending against the 10-step PGD white-box attacks with the steps size set to 1 and perturbation radius $\epsilon=4$. ALPS is seen to dominate in all other experiment settings except in CIFAR-10 where ALPS appears weaker than Label Smoothing baseline with a gap $1.77\%$. It is interesting to note that Dropout and label smoothing appear to perform well with respect to defending both FGSM and PGD white-box adversarial attacks. On the other hand, MixUp appears quite weak against PGD attacks, and this weakness is seen nicely cured in ALPS, arguably attributed to the adversarial labelling in ALPS.

For the black-box adversarial attack setting, we firstly train a model with ERM to generate a set of adversarial examples. Then we measure the adversarial robustness of the other methods on the set of adversarial examples. Table~\ref{tab:fgsm} (right) and Table~\ref{tab:pgd} (right) summaries the evaluation results for FGSM and PGD respectively. Similarly to white-box attacks, ALPS outperforms all other baselines. It is worth mentioning that the models trained with MixUp also achieves significant improvement on all three datasets. In addition, the Dropout and label smoothing methods are relatively not much effective to black-box attacks in contrast to the competitive performances in the white-box attack experiments.

\paragraph{Sanity Check for Adversarial Attacks}
The issue known as ``gradient masking'' \cite{athalye2018obfuscated} may lead to false sense of adversarial robustness owing to the inferior quality of the gradients, {\em e.g.}, shattered gradients, vanishing gradients. To validate this, we further conduct unbounded PGD sanity check as \cite{verma2019manifold}. We take our trained models with MixUp and ALPS on the four datasets, and run PGD for 200 iterations with a step size $0.01$. The unbounded PGD attack reduced the ALPS’s accuracy to $0.22\%$ on SVHN dataset and reduce the accuracy of other seven cases to $0\%$. This suggests that there are no obvious obfuscated gradients in our robustness measurement. 

\section{Conclusion}

In this paper, we suggest VRM as a general principle for regularization. Under this perspective, MixUp appears to have a combined advantage of label smoothing and the traditional VRM schemes. Using an approach similar to MixUp, we construct another vicinal joint distribution of synthetic samples and their adversarial labels. Through minimizing the upper bound of the VRM loss, this formulation gives rise to a novel regularization scheme, ALPS. We show that ALPS not only demonstrates a state-of-the-art regularization performance, it also provides the trained model with strong adversarial robustness. This paper appears to be the first work that develops a strong regularization scheme while also explicitly aiming at adversarial robustness. We hope that further research will be inspired towards this ``robust regularization'' objective.

\section*{Acknowledgements}

This work is supported partly by the National Natural Science Foundation of China (No. 61772059), by the Fundamental Research Funds for the Central Universities, by the Beijing S\&T Committee (No. Z191100008619007), by National Key Research and Development Program (2018YFB1306000), by the Beijing Advanced Innovation Center for Big Data and Brain Computing and by State Key Laboratory of Software Development Environment (No. SKLSDE-2020ZX-14).

\bibliographystyle{named}
\bibliography{main}

\end{document}